
\documentclass[11pt]{amsart} 

\usepackage[utf8]{inputenc} 
\usepackage{amssymb}
\usepackage{url}
\usepackage{fixmath}



 \DeclareMathOperator{\atantwo}{atan2}

\newtheorem{lem}{Lemma}
\newtheorem{thm}{Theorem}


\usepackage{graphicx} 


\usepackage{array} 




\def\<{\langle}
\def\>{\rangle}



\title[Camera pose]{Characterization of the multiplicity of solutions for camera pose given two 
	vertically-aligned landmarks and accelerometer}
\author{Alexander R. Pruss}

\linespread{1.7}
\begin{document}
\begin{abstract}
We consider the problem of recovering the position and orientation of a camera equipped 
with an accelerometer from sensor images of two labeled landmarks whose positions in a coordinate
system aligned in a known way with gravity are known. This a variant on the much studied P$n$P 
problem of recovering camera position and orientation from $n$ points without any gravitational data.
It is proved that in three types of singular cases there are infinitely many solutions, in another type of 
case there is one, and in a final type of case there are two. A precise characterization of each type of case.
In particular, there is always a unique solution in the practically interesting case where the two landmarks 
are at the same altitude and the camera is at a different altitude. This case is studied by numerical simulation
and an implementation on a consumer cellphone. It is also proved that if the two landmarks are unlabeled,
then apart from the same singular cases, there are still always one or two solutions.
\end{abstract}

\maketitle
\section{Introduction}
Given $n$ landmark points with known coordinates in the object coordinate system, and a camera image of these points with 
the camera having known intrinsic parameters, the P$n$P problem is the problem of recovering the camera pose---location of optical center,
pitch, roll and yaw. It is known that with three landmarks, when the landmarks and the camera optical center are not
coplanar, there are at most four solutions (see \cite{WHZ} for a study of the multiplicity of solutions). With two distinct landmarks, 
given a camera pose that is a solution, rotation about the axis through the landmarks generates infinitely many other solutions,
as long as the camera is not on the axis.

But suppose we have two additional pieces of data: the camera is equipped with an accelerometer measuring gravity (or, equivalently,
has a coordinate system with a known orientation with respect to gravity), and the object coordinate system has the $z$-axis 
aligned vertically upwards, i.e., antiparallel to gravity~\cite{KBP10, DGMP13}. Some studies have considered a variant formulation
where the object comprising the landmarks is equipped with an accelerometer~\cite{DGMP14,DGMP21} instead of having its coordinate
system vertically aligned. But that formulation is mathematically equivalent to the one with the vertically aligned object coordinate 
system, since given the gravity vector from the accelerometer, it is easy to compute a rotation to a vertically-aligned coordinate system, 
and then we can compute the camera pose relative to this new coordinate system, and rotate to get a pose relative to the original coordinate 
system. We will thus work with the vertically-aligned object coordinate system for simplicity, and our characterization of the 
multiplicity of solutions will fully generalize.

It will be shown that there are three families of singular cases in which there are infinitely many solutions. In the remaining 
cases, there are either one or two solutions, and a complete characterization will be given of when the solution is unique. A particularly 
practically interesting subcase is where the line joining the landmarks is horizontal (i.e., perpendicular to gravity), but the 
plane through the landmarks and the camera's optical center is not horizontal (i.e., the camera is either below or above the landmarks).
In that subcase, there is always a unique solution. This could be used, for instance, to locate an accelerometer-equipped 
drone by using two landmarks on the ground, or to locate the precise pose of a game controller equipped with an accelerometer
and camera by means of landmarks below the expected level of the user's hand.

We assume that the gravitational force is parallel at all points. And as is usual in discussions of the P$n$P problem,
we will take the landmarks to be labeled or distinguishable in some way (e.g., identifiable by color, fiducial markers, or restrictions on 
camera position).

The present results expand on the proof in \cite{KBP10} that there are always at most two solutions, and correct 
correct the results of \cite{DGMP21} who found one of the three families of singular cases, and claimed
that in all the remaining cases there are two solutions (see also \cite{DGMP13,DGMP14}). Because of the difference between 
the present results and previous work on the problem, somewhat more detail will be included in the proofs than is usual.  

We can consider two kinds of cameras: cameras with a rectangular sensor and a pinhole mode as in previous studies, and spherical 
``360 degree'' cameras that capture a spherical image relative to a central optical center. The pinhole camera's optical center is the location of the pinhole while the spherical camera's optical center is the center of the image.
We can think of the pose of the camera as giving the transformation from camera coordinates to object coordinates, where
the camera coordinates have their origin at the optical center, and are defined by physical features of the camera (e.g., 
the orientation of the rectangular sensor or distinguished directions of the spherical image). 

Note that images from the pinhole camera can be projected to virtual images on a portion of the image sphere of the spherical 
camera and so without loss of generality the analysis can proceed entirely in terms of spherical cameras.

The above assumes the landmarks are labeled. If they are not, the number of solutions for camera pose  
must range between one and four by our main result in non-singular cases, but it is easy to show as a corollary
to our proofs for the labeled case that the number of solutions in non-singular cases must still be one or two, 
though we do not give a full characterization of when the number is one and when it is two.

Beyond the theoretical work, a numerical simulation and physical implementation on a consumer cellphone are presented 
in the special unique-solution configuration where the two labeled landmarks lie on a horizontal line.

\section{Sensor data and singular cases}\label{sec:singular}
Assume that the camera and object coordinate systems differ by a rigid motion.

With both camera types, the image data and intrinsic camera parameters can be used to straightforwardly compute the 
direction of normalized image vectors $\mathbold v_1$ and $\mathbold v_2$ 
from the the camera optical center to the two respective landmarks in the camera's coordinate system. 

\begin{figure}
\includegraphics[width=8cm]{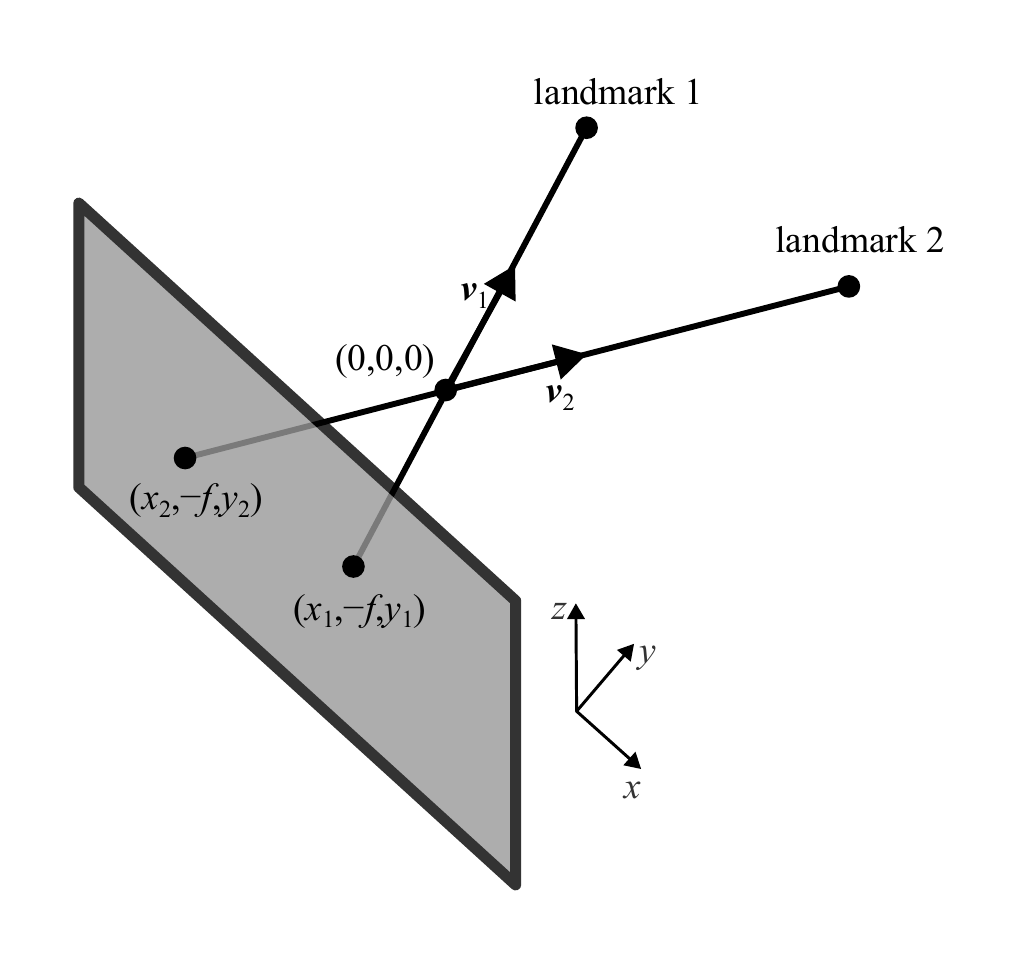}
\caption{A pinhole camera coordinate system with the points corresponding to $(x_i,y_i)$ coordinates on the sensor.}
\label{fig:planar}
\end{figure}
For instance, in the pinhole model, given a pinhole at the origin of the camera's 
coordinate system, we will have landmark images on the rectangular image sensor behind the pinhole located at 
three dimensional coordinates $\mathbold w_i$ (e.g., we can take $\mathbold w_i = (x_i,-f,y_i)$
where $f$ is the focal length, and $(x_i,y_i)$ are two-dimensional coordinates on the sensor surface with the optical axis
passing through $(0,0)$, as in Figure~\ref{fig:planar}). Then we can define $\mathbold v_i = -\mathbold w_i/|w_i|$. In the spherical model, supposing that 
the two landmarks occur at three-dimensional polar coordinates $(r,\theta_i,\phi_i)$ in the spherical image (with $r$ arbitrary), 
we can suppose that $\mathbold v_i=(\cos\phi_i\sin \theta_i, \sin\phi_i \sin \theta_i, \cos \theta_i)$. 

Additionally, we have the normalized vertical vector $\mathbold u$, antiparallel to the gravitational force, in the same 
coordinate system. We can now reduce the $\mathbold v_i$ and $\mathbold u$ data to three numbers relevant to determining 
the camera's position in the object coordinate system.

\begin{figure}
\includegraphics[height=6cm]{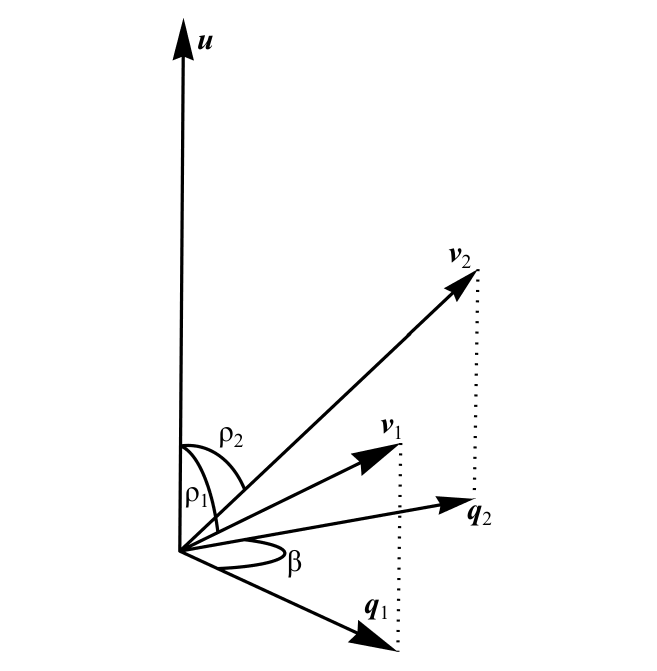}
\caption{The vertical tilt angles $\rho_i$ and the horizontal landmark angle $\beta$. The origin is the camera optical center.}
\label{fig:betarho}
\end{figure}
First, we have the \textit{image tilt angles}
$$
	\rho_i = \arccos(\mathbold u\cdot\mathbold v_i) 
$$	
between the landmark image vectors and vertical, ranging over $[0,\pi]$.
Second, we can project $\mathbold v_i$ on the horizontal plane (the plane perpendicular to gravity) to yield horizontal 
vectors
$$
	\mathbold q_i = \mathbold v_i - (\mathbold v_i\cdot\mathbold u) \mathbold u.
$$
	If neither $\mathbold q_i$ is zero (i.e., if
$\rho_i \in (0,\pi)$ for $i=1,2$), then let the \textit{horizontal landmark angle} $\beta$ be the angle in $(-\pi,\pi]$ from $q_1$ to $q_2$ measured counterclockwise, i.e.,
$$
\beta = \atantwo( (\mathbold q_1\times \mathbold q_2)\cdot \mathbold u, \mathbold q_1\cdot\mathbold q_2 ).
$$
See Figure~\ref{fig:betarho}. Otherwise, let $\beta=0$.

The information given by $\rho_1$, $\rho_2$ and $\beta$ is invariant under camera rotation about the vertical axis through
the optical center.
Furthermore, all the information relevant to determining camera position in object coordinates (or, equivalently, the 
translation component of the transformation $T$) is contained in $\rho_1$, $\rho_2$, $\beta$ and the respective positions 
of the two landmarks in object coordinates. For our full information for determining pose is given by $\mathbold v_1$, 
$\mathbold v_2$ and $\mathbold u$ in camera coordinates and the landmark positions in object coordinates. 
Now, given $\rho_1$, $\rho_2$ and $\beta$, suppose we have any normalized vectors $\mathbold v_i'$ such that 
$\rho_i = \arccos(\mathbold u\cdot \mathbold v'_i)$, and also such that $\beta$ is the signed angle from $\mathbold q_1'$ to 
$\mathbold q_2'$ (where $\mathbold q'_i$ is the horizontal projection of $\mathbold v_i$) if the $\mathbold q_i'$ are 
both non-zero (i.e., $\rho_i \in (0,\pi)$ for $i=1,2$). These vectors will be rotations of the ``correct'' normalized 
vectors $\mathbold v_i$ about $\mathbold u$, and will correspond to the camera having been rotated about the vertical 
axis through its optical center, which does not change the camera position.  Thus, $\rho_1$, $\rho_2$ and $\beta$ contain
all the information relevant to camera position.


Thus if there are infinitely many solutions for the camera position in object coordinates given $\rho_1$, $\rho_2$ and $\beta$ and 
landmark positions in object coordinates, it follows that we would still have infinitely many solutions given the 
fuller information provided by $\mathbold v_1$, $\mathbold v_2$ and $\mathbold u$. We can now describe the three families of 
singular cases.

For simplicity, it is always assumed that physical considerations preclude the camera optical center from coinciding 
with a landmark. If somehow that is possible (e.g., with a pinhole camera, one of the landmarks could technically lie 
in the pinhole), some of the discussion of singular cases will need slight modification.

One singular case with infinitely many solutions has been identified in \cite{DGMP21}, namely where the optical center and the landmarks are
colinear. In this case $\mathbold v_1 = \pm \mathbold v_2$ (only the $+$ option is available for the pinhole camera). Given 
a solution for camera position, translating the camera along a line joining the optical center and the landmarks without changing
the relative ordering of the camera and landmarks along that line does not change the values of $\rho_1$, $\rho_2$ and 
$\beta$ (or any other measured aspect of the sensor image). Therefore, there must be infinitely many solutions for camera position.

A second singular case, not identified in earlier work, is where the line between the two landmarks is vertical, and the 
camera optical center is not on this line. In such a case, rotating the camera about the line joining the landmarks 
does not change $\rho_1$, $\rho_2$ or $\beta$, and hence given any solution for camera position, there are infinitely many others. 

\begin{figure}
\includegraphics[width=6cm]{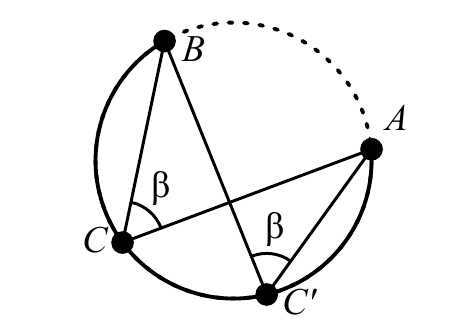}
\caption{Third singular case: The possible positions for the camera line on the interior of the solid segment $S$ between 
co-horizontal landmarks $A$ to $B$.}\label{fig:circle}
\end{figure}
The third singular case is perhaps the most interesting. Suppose that the two landmarks $A$  and $B$ and the camera optical 
center $C$ all lie on the same horizontal plane $K$, but the three points are not colinear. It is a standard fact of plane 
geometry that the locus of points $C'$ such that the signed angle $AC'B$ is equal to $\beta$ will be the interior of 
a segment $S$ of a circle passing through $A$ and $B$ where the circle's radius depends on $\beta$ and the distance $|AB|$
(see Figure~\ref{fig:circle}). 
If we move the camera to different points $C'$ on $S$, if necessary rotating it around the vertical axis to ensure that the 
landmarks remain in the sensor image in the case of the pinhole camera (which does not change the camera position), the value
of $\beta$ will be unchanged, and we will always have $\rho_1=\rho_2=\pi/2$. Thus, again, we have infinitely many solutions. This singular case will 
be of some importance when we discuss implementation in Section~\ref{sec:coplanarity}.

In the next section we will prove that these are the only singular cases, and that in the remaining configurations there 
are at most
two solutions for camera pose.

Note that it is easy to use the landmark positions as well as $\rho_1$, $\rho_2$ and $\beta$ to determine whether the camera is 
in a singular position, and if so, which one. The first singular case happens when $\rho_1$ and $\rho_2$ are both in 
$\{0,\pi\}$ (the line will be vertical) or when $\beta=0$ and $\rho_1=\rho_2$ (both landmarks produce a single camera image)
or when $\beta=\pi$ and $\rho_1=\pi-\rho_2$ (the landmarks lie on opposite sides of the optical center). The second singular
case happens when the landmarks are on a vertical line, $\beta=0$ and $\rho_1,\rho_2 \in (0,\pi)$. The third singular 
case happens when $\rho_1=\rho_2=\pi/2$. 

\section{Characterization of solutions}
Say that the slope from $(x_1,x_2,x_3)$ to $(y_1,y_2,y_3)$ is $(y_3-x_3)/((x_1-y_1)^2+(x_2-y_2)^2)^{1/2}$, allowing this 
to be $\pm \infty$. For brevity, we will refer to the position of the camera optical center as just the position of the camera.
Our main result is as follows.

\begin{thm}\label{th:solutions} 
	Assume that the $z$-axis of the object coordinates is antiparallel to gravity and that we are given  
	the upward vector $\mathbold u$ and the vectors $\mathbold v_i$ in camera coordinates from the 
	camera to the respective landmarks, as well the object coordinates of the respective landmarks.

	In each of the following three singular cases, there are infinitely many solutions for camera position: 
\begin{enumerate}
	\item[(i)] the two landmarks and camera 
	are colinear
	\item[(ii)] a vertical line passes through both landmarks
	\item[(iii)] a horizontal plane passes through both landmarks and the camera. 
\end{enumerate}
	
	If none of (i)--(iii) obtain, there are at most two solutions for camera pose, and furthermore there is exactly 
	one solution if and only if 
\begin{enumerate}
	\item[(a)] $\max(|s_1|,|s_2|) \ge |s|$ where $s_i$ be the slope from the camera to landmark $i$ and $s$ is the 
	slope from landmark $1$ to landmark $2$, or 
	\item[(b)] $|s|$ is positive, $\max(|s_1|,|s_2|)$ is finite, and the camera lies on the plane $L$ defined by both 
	landmarks and a horizontal line $N$ through landmark $1$ and perpendicular to the line through the landmarks.
\end{enumerate}
\end{thm}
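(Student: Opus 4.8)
The plan is to reduce everything to a one-variable equation and then count its roots on a short interval. By the discussion in Section~\ref{sec:singular}, all information relevant to camera position is carried by $\rho_1,\rho_2,\beta$ together with the landmark coordinates, and (generically) a position determines the orientation uniquely, so it suffices to count admissible positions $C$. Cases (i)--(iii) already give infinitely many positions in Section~\ref{sec:singular}, so I would concentrate on the non-singular count. The first substantive step is to convert the angular data to slopes: since rotation preserves angles, $\cos\rho_i=(A_{i,z}-C_z)/|A_i-C|$, whence the slope from $C$ to landmark $i$ equals $\cot\rho_i=s_i$. Choosing object coordinates with landmark $1$ at the origin and the horizontal displacement to landmark $2$ along the $x$-axis, so that $A=(0,0,0)$ and $B=(d,0,sd)$ with $d>0$ (the excluded $d=0$ being exactly the vertical-line singular case), the two slope equations determine $C_z$ from the horizontal position $C'=(C_x,C_y)$ and leave a single consistency condition
$$
s_2 h_2 - s_1 h_1 = s\,d,
$$
where $h_i=|C'-A_i'|$ are the horizontal distances to the landmark projections.

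The remaining datum $\beta$ says $C'$ sees $A'B'$ under signed angle $\beta$, so by the inscribed-angle theorem $C'$ traverses a circular arc through $A'$ and $B'$. Parametrizing that arc by the base angle $\alpha=\angle B'A'C'\in(0,\pi-|\beta|)$ and writing $h_1,h_2$ by the law of sines, the consistency condition collapses to
$$
G(\alpha):=\frac{(s_2-s_1\cos|\beta|)\sin\alpha-s_1\sin|\beta|\cos\alpha}{\sin|\beta|}=s,
$$
whose left side is a sinusoid of amplitude $\mathcal A=\sqrt{s_1^2+s_2^2-2s_1s_2\cos|\beta|}\,/\sin|\beta|$ with the convenient endpoint values $G(0)=-s_1$ and $G(\pi-|\beta|)=s_2$. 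Counting solutions of $G(\alpha)=s$ drives everything. On an interval of length $\pi-|\beta|<\pi$ a sinusoid meets any horizontal line at most twice (two crossings would require an interior maximum and minimum, hence length exceeding $\pi$), giving the ``at most two'' claim.

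For the uniqueness criterion I would show that a second crossing exists exactly when $G$ has an interior extremum lying strictly beyond $s$ while the two endpoint values lie strictly on the same side of $s$; the interior-extremum test is $\sgn(s_2-s_1\cos|\beta|)\ne\sgn(s_1-s_2\cos|\beta|)$, and unwinding it reduces the two-solution condition to the clean symmetric statement that two solutions occur iff $|s|>\max(|s_1|,|s_2|)$ and $|s|<\mathcal A$. The boundary $|s|=\mathcal A$ is a double root, and a direct computation shows that $s^2\sin^2|\beta|=s_1^2+s_2^2-2s_1s_2\cos|\beta|$ is equivalent to $C$ lying on the plane $z=sx$ in these coordinates, which is precisely the plane $L$ spanned by $A$, $B$, and the horizontal line $N$ through landmark $1$ perpendicular to $AB$. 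Since the true camera always furnishes one solution, $|s|\le\mathcal A$ holds automatically; hence the count is two exactly when (a) fails and $C\notin L$, i.e.\ the solution is unique exactly when (a) or (b) holds (the side conditions $|s|>0$ and $\max(|s_1|,|s_2|)$ finite in (b) being automatic once (a) fails).

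The main obstacle is the sign bookkeeping in the last step: the endpoint values are $-s_1$ and $s_2$ rather than $|s_1|,|s_2|$, and whether $G$ is monotone or has an interior hump (and whether that hump is a maximum or a minimum) depends on the signs of $s_1,s_2,s$ and on $\cos|\beta|$, so the equivalence with the symmetric condition $\max(|s_1|,|s_2|)<|s|$ must be verified across cases. I would organize this by the sign of $s$ and by the two inequalities $s_1\cos|\beta|\lessgtr s_2$ and $s_2\cos|\beta|\lessgtr s_1$. Finally I would dispatch the degenerate configurations separately: $\beta\in\{0,\pi\}$, where the arc degenerates to a line and the consistency relation becomes affine in a line parameter, yielding at most two solutions directly; and $\rho_i\in\{0,\pi\}$, where $s_i=\pm\infty$, $C'$ coincides with a landmark projection, the remaining constraints fix $C$ uniquely, and (a) holds since $\max(|s_1|,|s_2|)=\infty\ge|s|$. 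These confirm the degenerate cases fall under the unique-solution conclusion.
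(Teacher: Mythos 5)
Your route is genuinely different from the paper's and, as far as I can check, sound. The paper eliminates variables to get a quadratic \eqref{eq:quad} in the horizontal distance $d_j$, proves the leading coefficient is positive outside the singular cases, and characterizes the two-solution regime by the sign of the constant term $c$ (both roots positive) together with $b^2\ne 4ac$, finally identifying $b^2=4ac$ with the plane $L$ by a Mathematica computation. You instead parametrize the inscribed-angle arc by the base angle $\alpha$ and reduce everything to counting roots of a sinusoid $G(\alpha)=s$ on an interval of length $\pi-|\beta|<\pi$. This buys a cleaner, computation-free ``at most two'' argument and a conceptual reading of condition (b): the double root is exactly $|s|=\mathcal A$, i.e.\ the level $s$ touching the amplitude. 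I verified your key identity --- one finds $s^2\sin^2\beta-(s_1^2+s_2^2-2s_1s_2\cos\beta)=-d^2(sC_x-C_z)^2/(d_1^2d_2^2)$ using $\sin^2\beta=d^2C_y^2/(d_1^2d_2^2)$ --- so $|s|\le\mathcal A$ always and equality holds exactly on $L$, matching the paper's discriminant computation. Your amplitude, endpoint values $G(0)=-s_1$, $G(\pi-|\beta|)=s_2$, and interior-extremum test all check out.

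Two things keep this from being a complete proof. First, the pivotal equivalence --- that the crossing-count condition (endpoints strictly on one side of $s$, interior extremum strictly on the other) coincides with the symmetric condition $\max(|s_1|,|s_2|)<|s|<\mathcal A$ --- is exactly where the theorem's content lives, and you defer it to an unexecuted case analysis. It does go through, and more easily than you fear: the constraint that the ground truth supplies one root forces $s$ into the range of $G$, which is $(\min(-s_1,s_2),\mathcal A]$ in the interior-max case, $[-\mathcal A,\max(-s_1,s_2))$ in the interior-min case, and the open interval between $-s_1$ and $s_2$ in the monotone case; in each case the ``wrong-sign'' branch of $|s|>\max(|s_1|,|s_2|)$ is excluded by the range and the asymmetric endpoint comparison collapses to the symmetric one. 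You should write this out. Second, for $\beta\in\{0,\pi\}$ you note the relation becomes affine and gives at most two solutions (two affine branches when $\beta=0$, one when $\beta=\pi$), but you do not carry the uniqueness characterization through to the form (a)/(b) in that subcase, nor do you explicitly exclude the colinear degeneration ($s_1=s_2$ with $\beta=0$) where the affine equation becomes identically satisfiable. Also a small slip: \emph{three} crossings of a level by a sinusoid require two interior extrema (hence length $>\pi$), not two crossings requiring both a maximum and a minimum.
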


In particular, by (a) there is always a unique solution if $s=0$, i.e., the two landmarks are at the same altitude. 
This can be useful for applications where one can arrange the landmarks at the same altitude, and ensure that 
the camera is always above or always below the landmarks, thereby avoiding all multiplicity 
of solutions.  

\begin{figure}
\includegraphics[width=8cm]{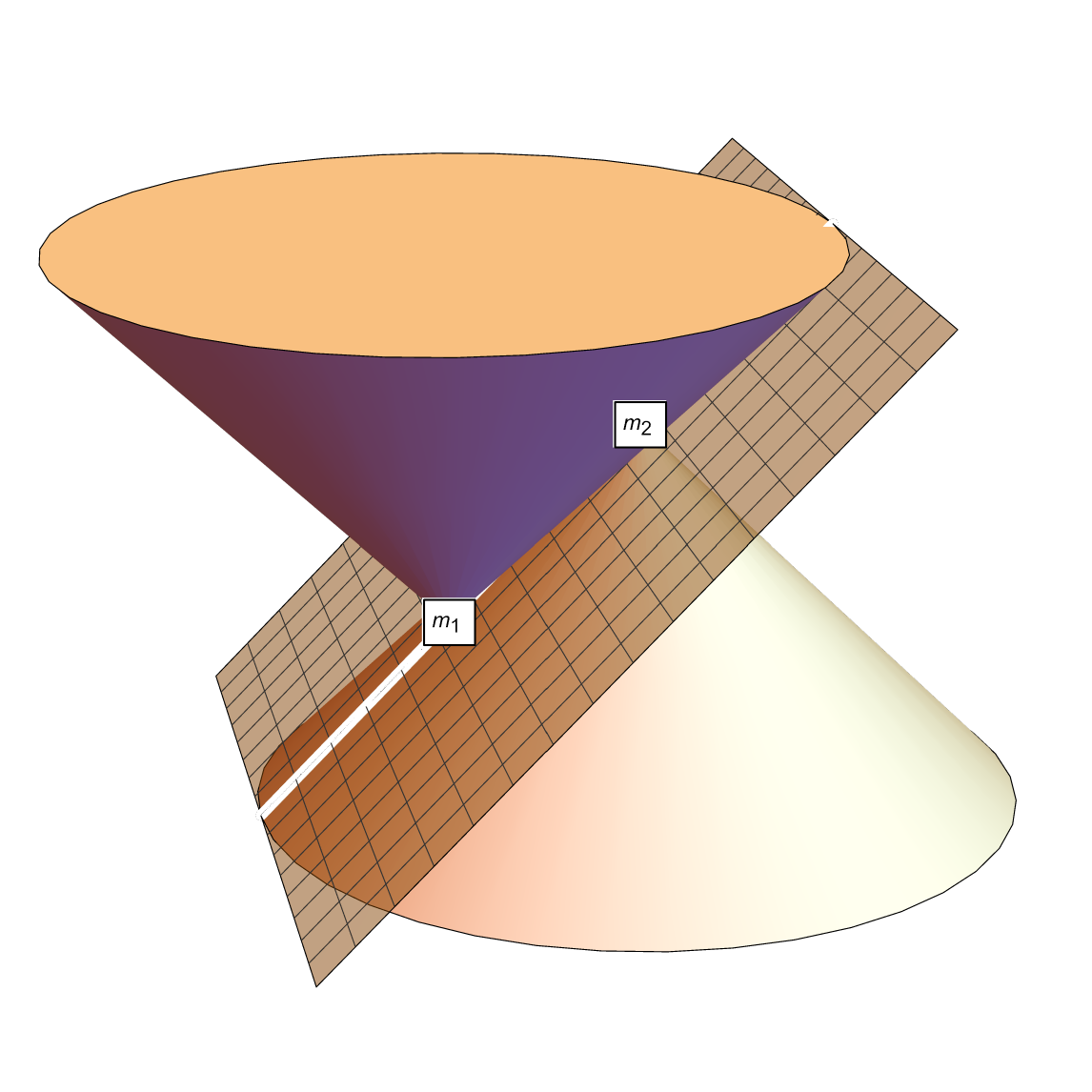}
\caption{Region of unique solution for landmarks at different altitudes.}
\label{fig:cones}
\end{figure}
If the landmarks are not at the same altitude nor on the same vertical line, the region of unique solution described by (a) consists 
of the union of two closed circular cones with vertical axes of symmetry and generator slope $|s|$, with apices at the landmarks, 
with the cone at the upper
landmark being pointed downward and the cone at the lower landmark being pointed upward (see Figure~\ref{fig:cones}), and with the line
through the landmarks removed (type (i) singularity). The region of unique solution described by (b) is a two-dimensional plane 
and hence unlikely to be of practical use.

The existence of infinitely many solutions in the singular cases was proved in the previous section.

Note that the following observation does not need an accelerometer.

\begin{lem}\label{lem:position-pose} If the two landmarks and camera are not colinear, then given the 
	camera position in object coordinates, and the normalized vectors $\mathbold v_i$ from the camera to 
	the respective landmarks, camera pose is uniquely determined.
\end{lem}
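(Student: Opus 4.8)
The plan is to reduce the determination of the full pose to the determination of its rotational part, and then to invoke the rigidity of a proper rotation. Writing the pose as a rigid motion $T(\mathbold x) = R\mathbold x + \mathbold t$ from camera to object coordinates, I first observe that the optical center sits at the origin of camera coordinates, so $T$ sends it to $\mathbold t$; since the camera position is given, $\mathbold t$ is pinned down immediately and the entire problem collapses to showing that the rotation $R$ (a proper rotation, i.e.\ an element of $SO(3)$) is unique.

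Next I would extract, on the object side, the two directions that $R$ must reproduce. With the camera at object-coordinate position $C$ and the landmarks at known object coordinates $P_1, P_2$, the unit direction from the camera to landmark $i$ is $\mathbold d_i = (P_i - C)/|P_i - C|$, computable from the given data. In camera coordinates landmark $i$ lies at $|P_i - C|\,\mathbold v_i$, and $T$ must send this point to $P_i$; substituting $T(\mathbold x) = R\mathbold x + C$ and cancelling the common positive factor yields the two constraints $R\mathbold v_i = \mathbold d_i$ for $i = 1, 2$.

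It then remains to show these two constraints determine $R$ uniquely, and this is where the non-colinearity hypothesis enters. If the camera, $A$ and $B$ were colinear we would have $\mathbold v_1 = \pm\mathbold v_2$; ruling this out guarantees that $\mathbold v_1$ and $\mathbold v_2$ are linearly independent. Because $R$ is a proper rotation it commutes with the cross product, so $R(\mathbold v_1 \times \mathbold v_2) = (R\mathbold v_1)\times(R\mathbold v_2) = \mathbold d_1 \times \mathbold d_2$, and the action of $R$ is thereby prescribed on all three vectors of the basis $\{\mathbold v_1, \mathbold v_2, \mathbold v_1 \times \mathbold v_2\}$. A linear map is determined by its values on a basis, so $R$, and hence the whole pose $T$, is unique.

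The one point requiring care is the handedness: two constraints of the form $R\mathbold v_i = \mathbold d_i$ would leave a reflection ambiguity for a general orthogonal map, and it is exactly the orientation-preserving property of a physical camera pose, captured by the cross-product identity, that removes it. I expect verifying that the data is consistent, namely that some such $R$ exists, which forces $\mathbold v_1 \cdot \mathbold v_2 = \mathbold d_1 \cdot \mathbold d_2$ and $|\mathbold v_1 \times \mathbold v_2| = |\mathbold d_1 \times \mathbold d_2|$, to be routine, since the $\mathbold v_i$ arise from an actual camera viewing the landmarks; the substantive content is the uniqueness, for which the cross-product step is the crux.
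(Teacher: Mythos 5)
Your proposal is correct and follows essentially the same route as the paper's proof: factor the pose into a translation (fixed by the given camera position) and a rotation constrained to carry the known object-space unit directions to the measured image vectors, with non-colinearity giving linear independence. You simply spell out the final uniqueness step (extending to the basis via the cross product, which also handles the reflection ambiguity) that the paper leaves implicit.
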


\begin{proof}[Proof of Lemma~\ref{lem:position-pose}]
Let $RT$ be the transformation from object to camera coordinates, where $R$ is a rotation through the origin
and $T$ is the translation which is known given the camera position. 
Let $\mathbold {v'}_i = T\mathbold m_i/|T\mathbold m_i|$ be unit vectors from the camera to the landmarks.
Then we must have $R\mathbold {v'}_i = \mathbold v_i$. Since $\mathbold v_1$, $\mathbold v_2$ and the origin are not 
colinear, $R$ is unique.
\end{proof}

Write $\mathbold m_i = (m_{i1},m_{i2},m_{i3})$ for the landmark positions in object coordinates. Let 
$$
	H=m_{23}-m_{13}
$$ 
be the altitude difference between the landmarks and let
$$
	d=((m_{11}-m_{21})+(m_{12}-m_{22})^2)^{1/2}
$$
be the distance between the two landmarks' projections onto a horizontal plane. Let $\mathbold c=(c_1,c_2,c_3)$ be the camera 
position in object coordinates. Let 
$$
	h_i=c_{3}-m_{i3}
$$
be the height of the camera above landmark $i$, and let
$$
	d_i=((m_{i1}-c_{i1})+(m_{i2}-c_{i2})^2)^{1/2}
$$
be the distance between the camera and landmark positions projected onto a horizontal plane.

\begin{figure}
\includegraphics[width=4cm]{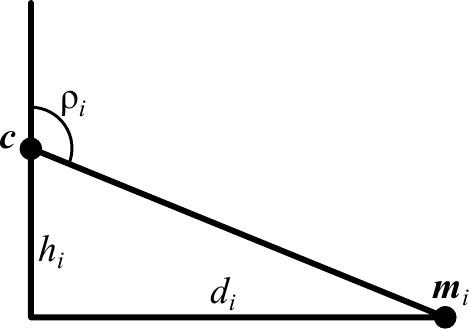}
\caption{The tilt angle $\rho_i$ from camera to landmark, the height $h_i$ of the camera above landmark, and the 
horizontal distance $d_i$ to the landmark.}
\label{fig:triangle}
\end{figure}
Assume now that we are not in any singular case.
Then:
\begin{equation}\label{eq:height}
	h_i = -d_i \cot \rho_i\text{ if }\rho_i \notin \{ 0,\pi \}.
\end{equation}
and 
\begin{equation}\label{eq:height-2}
	d_i = -h_i \tan \rho_i\text{ if }\rho_i \ne \pi/2
\end{equation}
(see Figure~\ref{fig:triangle}).

Additionally by our stipulation on $\beta$ in the case where the camera lies on a vertical line through a landmark:
\begin{equation}\label{eq:beta-stip}
	\beta = 0\text{ if }\rho_i \in \{0,\pi\}\text{ for some }i.
\end{equation}
By the law of cosines:
\begin{equation}\label{eq:width}
	d_1^2 + d_2^2 - 2d_1d_2 \cos \beta = d^2
\end{equation}
if the $d_i$ are both strictly positive. But if $d_i$ is zero for some $i$, the camera is on the same vertical 
line as landmark $i$, so $d=d_{3-i}$ and \eqref{eq:width} also holds (in this case $\beta=0$ was stipulated).
Finally:
\begin{equation}\label{eq:difference}
	h_1 - h_2 = H.
\end{equation}
These yield four equations in the four unknowns $h_1$, $h_2$, $d_1$ and $d_2$. 

\begin{lem}\label{lem:hhdd} 
	Suppose we are given two landmarks with positions $\mathbold m_i$ in object coordinates,
	as well as angles $\rho_1,\rho_2\in [0,\pi]$ and $\beta\in [-\pi,\pi]$ such that none 
	of the following cases obtain: 
	\begin{enumerate}
	\item[(i)] $\rho_1,\rho_2\in \{0,\pi\}$
	\item[(ii)] a vertical line passes through both landmarks
	\item[(iii)] $\rho_1=\rho_2=\pi/2$. 
	\end{enumerate}

	Suppose we 
	are given numbers 
	$h_1$, $h_2$, $d_1$ and $d_2$ satisfying \eqref{eq:height}--\eqref{eq:difference}, with 
	$d_1$ and $d_2$ non-negative.
	For $i=1,2$, suppose that 
\begin{equation}\label{eq:sign}
	\text{ if }d_i=0,\text{ then: } (h_i>0\text{ and }\rho_i=\pi)\text{ or }(h_i<0\text{ and }\rho_i=0).
\end{equation}
	
	Then there is a unique camera 
	position $\mathbold c$ in object 
	coordinates with image tilt angles $\rho_1$ and $\rho_2$, horizontal image angle $\beta$, and 
	the given parameters $h_1$, $h_2$, $d_1$ and $d_2$.
\end{lem}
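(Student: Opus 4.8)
The plan is to reconstruct $\mathbold c=(c_1,c_2,c_3)$ by decoupling its vertical coordinate from its horizontal projection, and to reduce the horizontal part to a planar two-circle intersection that the \emph{signed} angle $\beta$ disambiguates. First I would fix the altitude: any realizing camera must satisfy $c_3-m_{i3}=h_i$, so $c_3=m_{13}+h_1$ is forced, and \eqref{eq:difference} guarantees this equals $m_{23}+h_2$; thus $c_3$ exists and is unique. All remaining work lives in the horizontal plane. Writing $\mathbold p_i=(m_{i1},m_{i2})$ for the landmark projections, it remains to show that "horizontal distance from $c'$ to $\mathbold p_i$ equals $d_i$" together with "signed horizontal angle equals $\beta$" pin down a unique horizontal position $c'=(c_1,c_2)$.

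Next I would treat the generic sub-case $d_1,d_2>0$. Excluding lemma case (ii) forces $\mathbold p_1\ne\mathbold p_2$, i.e.\ the landmark separation $d$ is positive, so I can place $\mathbold p_1$ at the origin, $\mathbold p_2=(d,0)$, and write $c'=(x,y)$. The two circle constraints $|c'-\mathbold p_i|=d_i$, subtracted, force $x=(d^2+d_1^2-d_2^2)/(2d)$ uniquely, and \eqref{eq:width} is precisely the triangle inequality guaranteeing $d_1^2-x^2\ge 0$, giving the two mirror-image candidates $y=\pm\sqrt{d_1^2-x^2}$. The crux is that $\beta$ selects one of them: a direct computation of the planar cross product of the horizontal directions $\mathbold p_1-c'$ and $\mathbold p_2-c'$ yields $(\mathbold q_1\times\mathbold q_2)\cdot\mathbold u = d\,y$, whose sign is the sign of $y$, while the corresponding dot product reproduces $\cos\beta$ via \eqref{eq:width}. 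Hence $y>0,\ y=0,\ y<0$ correspond exactly to $\beta\in(0,\pi),\ \beta\in\{0,\pi\},\ \beta\in(-\pi,0)$, so the prescribed $\beta$ picks a single intersection point, establishing both existence and uniqueness of $c'$ here.

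Then I would dispatch the degenerate sub-cases with some $d_i=0$. By \eqref{eq:height-2}, $\rho_i\in\{0,\pi\}$ forces $d_i=0$, and \eqref{eq:sign} attaches the correct vertical orientation; excluding case (ii) prevents both $d_i$ from vanishing at once (that would force $\mathbold p_1=\mathbold p_2$), so at most one vanishes, say $d_1=0$, giving $c'=\mathbold p_1$ uniquely, whereupon \eqref{eq:width} with $\beta=0$ (stipulated by \eqref{eq:beta-stip}) collapses to $d_2=d$, confirming consistency. Finally I would verify that the reconstructed $\mathbold c$ genuinely realizes the prescribed image data rather than merely $h_i,d_i$: using \eqref{eq:height}/\eqref{eq:height-2} one checks $-h_i/\sqrt{h_i^2+d_i^2}=\cos\rho_i$ throughout each angular range, with the endpoints $\rho_i\in\{0,\pi/2,\pi\}$ handled by the sign condition \eqref{eq:sign}; and the dot/cross computation above shows the realized horizontal angle is $\beta$, while a vanishing $\mathbold q_i$ sends the definition to $\beta=0$, matching \eqref{eq:beta-stip}.

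I expect the main obstacle to be precise bookkeeping rather than depth. The delicate points are that the signed-angle disambiguation is valid \emph{exactly} because $d>0$ (i.e.\ case (ii) is excluded), and that the boundary angles $\rho_i\in\{0,\pi\}$, which make some $\mathbold q_i$ degenerate and would otherwise leave the horizontal direction undefined, are consistently forced into the $d_i=0,\ \beta=0$ regime by \eqref{eq:height-2}, \eqref{eq:beta-stip}, and \eqref{eq:sign}. The essential hypothesis is the exclusion of (ii); the exclusion of (i) is subsumed by it for the present reconstruction, and I would invoke the remaining hypotheses only to keep the angular endpoints consistent.
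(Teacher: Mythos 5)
Your proposal is correct and follows essentially the same route as the paper's proof: the vertical coordinate is forced by $h_1$ and \eqref{eq:difference}, the horizontal position is obtained as a two-circle intersection whose two mirror candidates are disambiguated by the sign of $\beta$ (the paper phrases this via the signed angle $ACB$, you via the explicit cross product, which amounts to the same thing), and the degenerate cases $d_i=0$ are split off using $d>0$ from the exclusion of case (ii). The only difference is presentational---your coordinate computation versus the paper's synthetic law-of-cosines argument---so no further comparison is needed.
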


The three conditions in Lemma~\ref{lem:hhdd} correspond to the three singular cases if $\rho_1$, $\rho_2$ and $\beta$ 
are derived from a camera pose. Furthermore, \eqref{eq:sign} is automatically satisfied if the $d_i$ and $h_i$ parameters 
come from image angles of a camera pose. 

\begin{proof}[Proof of Lemma~\ref{lem:hhdd}]
	Let $p$ be the projection $p(x_1,x_2,x_3)=(x_1,x_2)$ from three dimensional space to a horizontal plane $K$.
	
	Equation \eqref{eq:width} guarantees that $d_1$, $d_2$ and $d$ satisfy
	the triangle inequalities and hence are the sides of a possibly degenerate triangle. 
	
	Since $d>0$ as we are not in case (ii), it follows by the triangle inequalities that 
	at most one of the $d_i$ is zero. 
	
	Suppose first that $d_1=0$ and $d_2>0$. In this case, any 
	camera position $\mathbold c$ with parameters $h_1$, $h_2$, $d_1$ and $d_2$
	will satisfy $\mathbold c = (m_{11},m_{12},m_{13}+h_1)$, and hence there is at most one 
	camera position with the specified parameters $h_1$, $h_2$, $d_1$ and $d_2$. 
	
	We now show that 
	this $\mathbold c$ has all the required parameters and image angles. Clearly, it has the 
	correct $d_1$ and $h_1$ parameters. By \eqref{eq:difference} it also has the correct 
	$h_2$ parameter. The triangle inequalities imply that $d_2=d$, and $\mathbold c = (m_{11},m_{12},m_{13}+h_1)$
	indeed has $d_2$ parameter equal to the distance between the landmarks.
	
	We stipulated the horizontal image angle is zero if the camera is on the same vertical
	line as a landmark, and this is the case for our $\mathbold c$, while $\beta=0$ by \eqref{eq:beta-stip}
	since $\rho_1 \in \{0,\pi\}$ by \eqref{eq:sign}.
	The image tilt angle for $\mathbold c$ and landmark $1$ is equal to $\rho_1$ as $\rho_1$ satisfies
	\eqref{eq:sign}. It remains to check the image tilt angle for $\mathbold c$ and landmark $2$. 	
	Call this angle $\rho_2'$. Then $\rho_2' \in (0,\pi)$ as $d_2>0$ and $\cot\rho_2'=-h_2/d_2$.
	Now, $\rho_2$ cannot be in $\{0,\pi\}$ by \eqref{eq:height-2} as $d_2>0$, so $\cot\rho_2=\cot\rho_2'$
	by \eqref{eq:height}, and hence $\rho_2=\rho_2'$ as both angles are in $(0,\pi)$.
		
	The case where $d_2=0$ and $d_1>0$ follows by swapping indices. 

	Now suppose that $d_1$ and $d_2$ are both positive.
	Let $A=p\mathbold m_1$ and $B=p\mathbold m_2$. By the triangle inequalities, there are at most two points $C$ in $K$ 
	such that $|AC|=d_1$ and $|BC|=d_2$. For each of these points, the unsigned angle $ACB$ is the same. Call that
	angle $\beta'\in [0,\pi]$. By the law of cosines, we will have
$$
	d_1^2+d_2^2-2d_1d_2\cos \beta' = d^2.
$$
	Since $d_1>0$ and $d>0$, it follows from \eqref{eq:width} that $\cos\beta' = \cos\beta$. As 
	$\beta \in (-\pi,\pi]$, we must have $|\beta|=\beta'$. We now uniquely choose $C=(c_1,c_2)$ so that the signed angle 
	$ACB$ equals $\beta$. 
	
	Let $c_3=m_{13}+h_1$ and let $\mathbold c = (c_1,c_2,c_3)$. Then the image horizontal angle for a camera
	at $\mathbold c$ for the landmarks at $\mathbold m_i$ will be $\beta$ by the choice 
	of $(c_1,c_2)$. Moreover, the horizontal distance parameters $d_1$ and $d_2$ will be as given, and the
	heights $h_1$ and $h_2$ will also be as gvien.
	
	We now need to check the tilt angles. By \eqref{eq:difference}, we have $c_3=m_{i3}+h_i$ for $i=1,2$. 
	The tilt angle $\rho'_i$ for a camera at $\mathbold c$ with respect to landmark $i$ will 
	satisfy 
$$
	h_i = -d_i \cot \rho'_i.
$$
	By \eqref{eq:height} and \eqref{eq:height-2}, we have $h_i = -d_i \cot \rho_i$. Thus $\cot \rho'_i = \cot \rho_i$. 
	Since $\cot$ is one-to-one on $(0,\pi)$, it follows that $\rho'_i = \rho_i$. 
\end{proof}

%

\begin{proof}[Proof of Theorem~\ref{th:solutions}]
Assume we are not in any of the singular cases, as they were covered in Section~\ref{sec:singular}.

Consider first the special case where $\rho_i \in \{ 0, \pi \}$ for some $i$. 
Then the camera lies on the same 
vertical line as landmark $i$ so $d_i=0$. It cannot lie on the same vertical line as landmark $j=3-i$ or we would 
have a singular case. Therefore, $0<\rho_j<\pi$, and we can see (Figure~\ref{fig:triangle}) that $h_j = -d_j\cot\rho_j = -d\cot\rho_j$, since $d_j=d$
as $d_i=0$. As $h_i = h_j+m_{3j}-m_{3i}$, we have all four parameters $d_i$, $d_j$, $h_i$ and $h_j$ uniquely 
determined and the proof is completed by Lemma~\ref{lem:hhdd} once we note that if $\rho_i\in\{0,\pi\}$, then 
$|s_i|=\infty$. 

For the rest of the proof assume $0<\rho_i<\pi$ for $i=1,2$.
We cannot have $\rho_1=\rho_2=\pi/2$ as that is singular case (iii). Choose $i$ such that $\rho_i \ne \pi/2$, 
and let $j=3-i$. Let $\delta=h_i-h_j=m_{j3}-m_{i3}$. This is $H$ if $i=1$ and $-H$ if $i=2$ by \eqref{eq:difference}. 

We have $d_i = -(h_j+\delta) \tan\rho_i$ and $h_j = -d_j \cot\rho_j$ by \eqref{eq:height}--\eqref{eq:height-2}. Thus:
\begin{equation}\label{eq:di}
	d_i = (d_j\cot\rho_j-\delta)\tan\rho_i.
\end{equation}
Putting this into \eqref{eq:width} and collecting terms with the Mathematica computer algebra system we get the quadratic:
\begin{equation}\label{eq:quad}
	a d_j^2 + b d_j + c = 0
\end{equation}
where 
\begin{equation}\label{eq:a}
	a = 1-2\cos\beta\cot\rho_j\tan\rho_i + (\cot\rho_j\tan\rho_i)^2,
\end{equation}
\begin{equation}\label{eq:b}
	b = 2\delta\tan\rho_i (\cos\beta-\cot\rho_j\tan\rho_i)
\end{equation}
and
\begin{equation}\label{eq:c}
	c = \delta^2\tan^2\rho_i - d^2.
\end{equation}

Note that $a>0$ unless perhaps $\cos\beta = 1$. Suppose $\cos\beta=1$, so $\beta=0$. Then $a=(1-\cot\rho_j\tan\rho_i)^2$.
This can be zero only if $\rho_j=\rho_i$. But the only way to get $\beta=0$, $d_1>0$ and $d_2>0$ and $\rho_i=\rho_j$ is 
if the camera center and the landmarks are colinear, which is singular case (i). So we have $a>0$ in all non-singular cases.

Thus our quadratic is non-degenerate, and hence has at most two solutions
for $d_j$. Each solution for $d_j$ yields a 
unique solution for $d_i$ by \eqref{eq:di}, and then for $h_i$ and $h_j$ by \eqref{eq:height}, thereby 
showing that we have at most two solutions by Lemma~\ref{lem:hhdd}.

By Lemma~\ref{lem:hhdd}, we can characterize whether a solution of \eqref{eq:quad} yields a valid camera pose fitting with our landmarks 
by checking whether $d_i$ and $d_j$ are positive. We thus have two solutions to the pose problem precisely when there are 
two positive solutions for $d_j$ with the corresponding $d_i$ given by \eqref{eq:di} also being positive. 

Since there is at least one positive solution (as at least one pose---the ground truth---must satisfy our equations) 
and $a>0$, all the solutions of $ad_j^2+bd_j+c=0$ will be positive if and only if $c>0$, i.e.,
$$
	|\delta \tan\rho_i| > d.
$$	
And, still assuming there is a real solution, there will be two solutions provided that $b^2\ne 4ac$.

Suppose now that $\rho_j=\pi/2$.  Then $d_i=-\delta\tan\rho_i$ by \eqref{eq:di}. In this case, the camera is on the same 
horizontal plane $K_j$ as landmark $j$. Landmark $i$ must be above or below $K_j$ or else we have singular case (iii). If 
it is above $K_j$, then $\rho_i<\pi/2$ and $\delta<0$, so $d_i>0$. If landmark $i$ is below $K_j$, then 
$\rho_i>\pi/2$ and $\delta>0$, so again $d_i>0$. Thus, if $\rho_j=\pi/2$, we have two solutions
to the pose problem precisely when there are two solutions with $d_j>0$, i.e., when $|\delta\tan\rho_i|>0$ and $b^2\ne 4ac$.

Suppose that $\rho_j\ne\pi/2$. Repeating the proofs above with $i$ and $j$ swapped, we find that 
\begin{equation}\label{eq:quad2}
a^* d_i^2 + b^* d_i + c^* =0,
\end{equation}
where
$$
	a^* = 1-2\cos\beta\cot\rho_i\tan\rho_j + (\cot\rho_i\tan\rho_j)^2
$$
$$
	b^* = -2\delta\tan\rho_j (\cos\beta-\cot\rho_i\tan\rho_j)
$$
and
$$
	c^* = \delta^2\tan^2\rho_j - d^2.
$$
The solutions for $d_i$ here correspond one-to-one to the solutions $d_j$ to \eqref{eq:quad} according to \eqref{eq:di}, since 
neither $\rho_i$ nor $\rho_j$ is $\pi/2$. Thus assuming the pose problem has at least one solution, it has two solutions
precisely when there are two positive solutions for $d_j$ in \eqref{eq:quad} and all solutions are positive for $d_i$ in 
\eqref{eq:quad2}. Since \eqref{eq:quad2} has at least one positive solution (namely the one corresponding to the correct pose),
all solutions are positive provided $c^*>0$. Thus, we have two positive solutions if and only if both
\begin{equation}\label{eq:delta-rho}
	|\delta| \min(|\tan\rho_i|,|\tan\rho_j|) > d
\end{equation}
and
\begin{equation}\label{eq:neq}
	b^2 \ne 4ac.
\end{equation}
This is also true for $\rho_j=\pi/2$ under the stipulation that $|\tan (\pi/2)|=+\infty$, given what we have
shown earlier. 

Thus, \eqref{eq:delta-rho} and \eqref{eq:neq} characterize the cases where there are two solutions for pose in full generality,
assuming we are not in a singular case. Moreover, $s=\delta/d$ and $s_i=-\cot\rho_i$.
Thus \eqref{eq:delta-rho} says that 
$$
	\max(|s_1|,|s_2|) < |s|.
$$
If $0=|s|$ or $\max(|s_1|,|s_2|)=\infty$ then this cannot hold, and we always have a unique solution. We have thus shown that we have a unique solution if and only 
if either condition (a) of the theorem holds or else we have $0<|s|$, $\max(|s_1|,|s_2|)<\infty$ and $b^2=4ac$. 
To complete the proof we need to replace the condition $b^2=4ac$ with the condition that the the camera lies on the plane $L$.

Thus assume $0<|s|$ and $\max(|s_1|,|s_2|)<\infty$. We also have $|s|<\infty$ since we are not in singular case (ii).

Renumbering the landmarks, and rotating and translating the configuration, assume without loss of generality that 
the landmarks lie at $(-A,0,-B)$ and $(A,0,B)$, respectively, with $A$ and $B$ non-zero, and the camera position 
does not satisfy $c_3=-B$ so $\rho_1 \ne \pi/2$. Let $i=1$ and $j=2$.

We then have:
$$
	L = \{ (x_1,x_2,x_3) : Bx_1=Ax_3 \}
$$
for the plane in condition (b).  We have $d=2A$, $\delta=2B$, $d_1^2=((c_1+A)^2+c_2^2)^{1/2}$,
$d_2=((c_1-A)^2+c_2^2)^{1/2}$, $h_1=c_3+B$ and $h_2=c_3-B$. Furthermore $\cos\beta = (d_1^2+d_2^2-d^2)/(2 d_1 d_2)$ by the 
Law of Cosines, and $\cot\rho_1 = -h_1/d_1$ and $\cot\rho_2 = -h_2/d_2$. Plugging all this into Mathematica yields:
$$
	b^2-4ac = \frac{64 A^2 (Bc_1-Ac_3)^2}{(A^2-2Ac_1+c_1^2+c_2^2)(B+c_3)^2}.
$$
The denominator is zero only if $c_3=-B$, which we have assumed to be false, or $(c_1,c_2) = (A,0)$ which would 
imply that $|s_2|=\infty$. Thus $b^2-4ac$ is zero if and only if $\mathbold c \in L$. 
\end{proof}

It is likely that the discrepancy between Theorem~\ref{th:solutions} and the work of \cite{DGMP21} 
is due to two gaps in the latter. First, their work depends on deriving an equation of 
the form 
$$
	a\sin\alpha + b\cos\alpha + c=0
$$
for an unknown $\alpha$ determining the rotation matrix and for known quantities $a$, $b$ and $c$ (different from the ones in our proof).
This equation is correct, but in some cases it turns out that $a=b=c=0$, and at least one of the further
calculation depends on the expression $a/b$. Next the authors derive the equation
$$
	\alpha = \arccos(-c/(a^2+b^2))+\arctan(a/b)
$$
and claim that it has two solutions for $\alpha$, but they do not show both solutions correspond to a valid 
camera pose. (Compare how in our different geometric approach, sometimes an algebraic solution for $d_i$ or $d_j$ 
will be negative, in which case there is only one valid solution for pose, since distances cannot be negative.)

\section{Procedure}
We can summarize the procedure for obtaining the solution(s) from a camera image implicit in the proofs above, assuming
the landmarks are in a coordinate system with $z$-axis pointing upward.
\begin{enumerate}
\item Using the camera image and camera intrinsics (in case of pinhole camera), compute the unit image 
	vectors $\mathbold v_1$ and $\mathbold v_2$.
\item Use the accelerometer's gravity vector and $\mathbold v_1$ and $\mathbold v_2$ to compute the angles $\rho_1$, $\rho_2$ and $\beta$. 
\item Check to ensure we are not in a singular case. 
\item If $\rho_1$ or $\rho_2$ is in $\{0,\pi\}$, compute $d_1,d_2,h_1,h_2$ as indicated at the beginning of the proof of 
	Theorem~\ref{th:solutions}.
\item Otherwise, compute the solutions for $d_j$ (where $i$ is such that $d_i\ne\pi/2$ and $j=3-i$) using 
	\eqref{eq:quad}, and then compute $d_j,h_1,h_2$ using \eqref{eq:height} or \eqref{eq:height-2} and \eqref{eq:difference}. 
\item Discard any solutions with $d_1$ or $d_2$ negative.
\item Iterate over the remaining one or two solutions for $d_1,d_2,h_1,h_2$:
\begin{enumerate}
\item Find the intersection points of the circles of radii $d_1$ and $d_2$ around
	$(m_{11},m_{12})$ and $(m_{21},m_{22})$ respectively (at most two).
\item If $\beta\ne 0$, let $(c_1,c_2)$ be the intersection point such that the angle from $(m_{11},m_{12})$ to $(m_{21},m_{22})$ with vertex
	$(c_1,c_2)$ matches the sign of $\beta$.
\item If $\beta=0$, let $(c_1,c_2)$ be the unique intersection point (one of the $d_k$ is zero then).
\item Let $c_{3} = h_1 + m_{11}$. 
\item Let $T$ be a translation mapping $\mathbold c=(c_1,c_2,c_3)$ to $(0,0,0)$.
\item Let $\mathbold {v'}_i = (T\mathbold m_i)/|T\mathbold m_i|$.
\item Compute the rotation $R$ that transforms $\mathbold{v'}_1$ and $\mathbold{v'}_2$ to the image vectors $\mathbold v_1$ and 
	$\mathbold v_2$ respectively.
\item The transformation from the object coordinate system to the camera coordinate system is given by $RT$. 
\end{enumerate}
\end{enumerate}

\section{Unlabeled landmarks}
Let us now consider a variant situation where the camera is unable to distinguish the two landmarks from each other.
It follows immediately from Theorem~\ref{th:solutions} that, outside of the singular cases (i)--(iii), there are at most four
solutions. But it turns out that we can reduce that maximum number to two.

\begin{lem}\label{lem:identify}
If in the setting of Theorem~\ref{th:solutions} the ground truth camera pose is such that there are exactly two 
solutions for camera pose with landmarks labeled, and $\rho_1,\rho_2,\beta$ are the image angles 
to $\mathbold m_1$ and $\mathbold m_2$, then there is no camera position where $\rho_1,\rho_2,\beta$ are the 
image angles to $\mathbold m_2$ and $\mathbold m_1$.
\end{lem}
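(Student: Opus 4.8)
The plan is to recognize the swapped assignment as an ordinary single-camera problem for modified image data, and then to read off from the quadratic of Theorem~\ref{th:solutions} that this modified problem has no admissible (positive) distance solution. First I would observe that asking for a camera position where $\rho_1,\rho_2,\beta$ are the image angles to $\mathbold m_2$ and $\mathbold m_1$ is the same as solving the labeled problem for the \emph{same} landmarks $\mathbold m_1,\mathbold m_2$ but with the triple $(\rho_1,\rho_2,\beta)$ replaced by $(\rho_2,\rho_1,-\beta)$: the tilt to $\mathbold m_1$ becomes $\rho_2$, the tilt to $\mathbold m_2$ becomes $\rho_1$, and the signed horizontal angle from $\mathbold q_1$ to $\mathbold q_2$ becomes $-\beta$. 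Thus the swapped problem is attacked with exactly the machinery of Lemma~\ref{lem:hhdd} and the quadratics \eqref{eq:quad} and \eqref{eq:quad2}, with unchanged landmark data $\delta$ and $d$.

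Next I would record what the two-solution hypothesis buys. Since there are exactly two labeled solutions we are neither in the case $\rho_i\in\{0,\pi\}$ (which forces $|s_i|=\infty$ and hence a unique solution) nor in singular case (iii), so $\rho_1,\rho_2\in(0,\pi)$ and at least one of them differs from $\pi/2$. Consequently at least one of \eqref{eq:quad}, \eqref{eq:quad2} is non-degenerate, and the distinct distances of the two poses are its two roots (the companion distance is determined by \eqref{eq:di}), so in the normalization $\mathbold m_1=(-A,0,-B)$, $\mathbold m_2=(A,0,B)$ of the theorem the relevant original quadratic has two positive roots. With $a>0$ (resp.\ $a^*>0$) this forces the constant term positive and the linear term negative; that is, $c>0$ and $b<0$ (resp.\ $c^*>0$ and $b^*<0$).

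The crux is a short computation with \eqref{eq:a}--\eqref{eq:c} and their starred analogues. Feeding the swapped data $(\rho_2,\rho_1,-\beta)$ into these formulas, and using that the landmarks (hence $\delta$ and $d$) are unchanged and that $\cos(-\beta)=\cos\beta$, I would verify that the swapped problem's quadratic for $d_1$ has coefficients $(a,-b,c)$ and its quadratic for $d_2$ has coefficients $(a^*,-b^*,c^*)$. In other words, passing to the swapped problem reuses the very same pair of quadratics, interchanging their roles and reversing \emph{only} the sign of the linear coefficient, while leaving the positive leading and constant coefficients fixed. Choosing whichever of these is non-degenerate (this is possible by the previous paragraph, and the degeneracy condition $\rho_1=\pi/2$ or $\rho_2=\pi/2$ matches between the two members), its coefficients are now all strictly positive: its product of roots $c/a>0$ and sum of roots $b/a<0$, so both roots are strictly negative, equal to the negatives of the two positive original roots. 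Since a horizontal distance $d_i$ cannot be negative, the swapped problem admits no valid root, and so by Lemma~\ref{lem:hhdd} no camera position realizes the swapped assignment.

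I expect the main obstacle to be the sign bookkeeping in this crux computation: one must track simultaneously the change $\delta\mapsto-\delta$ that accompanies interchanging the roles of the two landmarks, the evenness of $\cos\beta$ under $\beta\mapsto-\beta$, and the extra minus sign already built into the starred coefficient $b^*$ in \eqref{eq:b}'s analogue, and one must dispatch the degenerate right-angle cases by hand (using the single non-degenerate quadratic, whose linear term flips sign in the same way). The geometric content is reassuringly clean: reflecting any hypothetical swapped solution across the vertical plane through the two landmarks negates $\beta$ and interchanges the two tilt values, so such a solution would be a camera sharing the original $\beta$ but with the two tilt roles exchanged; the sign analysis above shows precisely that this configuration would require negative distances and therefore cannot occur.
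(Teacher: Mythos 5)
Your proposal is correct and follows essentially the same route as the paper: identify the swapped assignment with the labeled problem for the angle triple $(\rho_2,\rho_1,-\beta)$ and the unchanged landmarks, observe that this negates only the linear coefficients of the quadratics \eqref{eq:quad} and \eqref{eq:quad2} while the two-solution hypothesis forces $b<0$, $c>0$ (and $b^*<0$, $c^*>0$), and conclude that the swapped quadratic has all positive coefficients and hence no positive root, so no valid camera position. The only cosmetic difference is the subcase where one tilt angle equals $\pi/2$: the paper dispatches it by the geometric remark that the tilt angles then reveal which landmark is higher, whereas you run the same sign-flip argument on the single non-degenerate quadratic, which also works.
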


In other words, in this case, if we make a guess of which sensor pixel corresponds to which landmark, we can
check the correctness of the guess, and hence we can use the sensor image to label the two landmarks.
It follows that the only non-singular case where we might not be able to label the two landmarks is when 
Theorem~\ref{th:solutions} yields a unique solution. In that case, neither possible guess as to the identifications
of the two landmarks can lead to two solutions by the Lemma, and hence there are again at most two solutions,
one for each identification guess. (An example is where the two landmarks are at the same height and the camera 
is at a different height: With landmarks unlabeled there will be two solutions for pose related by a rotation
by angle $\pi$ around the vertical line through the midpoint between the two landmarks.) Therefore we have:

\begin{thm}\label{th:isolutions}
	In the setting of Theorem~\ref{th:solutions}, if we are not in a singular case and the landmarks are not
	labeled, there are at most two solutions for camera pose.
\end{thm}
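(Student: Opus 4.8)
The plan is to reduce the unlabeled problem to two instances of the labeled problem and then combine the resulting bounds using Lemma~\ref{lem:identify}. In the unlabeled setting the sensor yields an unordered pair of image directions, say $\mathbold a$ and $\mathbold b$, together with the vertical $\mathbold u$; we simply do not know which of $\mathbold a,\mathbold b$ corresponds to $\mathbold m_1$ and which to $\mathbold m_2$. There are exactly two ways to make this assignment, so I would introduce two labeled problems: problem~1, in which $\mathbold m_1$ is imaged along $\mathbold a$ and $\mathbold m_2$ along $\mathbold b$, giving image angles $(\rho_1,\rho_2,\beta)$; and problem~2, the swapped assignment, in which $\mathbold m_1$ is imaged along $\mathbold b$ and $\mathbold m_2$ along $\mathbold a$. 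A short computation of the reduced data for the swapped assignment shows that problem~2 is just the labeled problem with angles $(\rho_2,\rho_1,-\beta)$ ascribed to $(\mathbold m_1,\mathbold m_2)$, which is precisely the configuration appearing in the conclusion of Lemma~\ref{lem:identify}.

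The first key step is the observation that every unlabeled pose solution solves problem~1 or problem~2. Indeed, any pose at which the camera images the two landmarks as the unordered set $\{\mathbold a,\mathbold b\}$ must realize one of the two correspondences, and each correspondence is exactly one of the labeled problems. Writing $S_1$ and $S_2$ for the solution sets of the two labeled problems, the number of unlabeled pose solutions is therefore at most $|S_1|+|S_2|$. Since we are not in the collinear singular case (i) we have $\mathbold a\ne\mathbold b$, so no single pose realizes both correspondences and the two sets are in fact disjoint, though disjointness is not needed for the upper bound.

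The second step bounds each $|S_i|$: by Theorem~\ref{th:solutions}, since we are not in a singular case, each labeled problem has at most two solutions, so $|S_1|\le 2$ and $|S_2|\le 2$. The third step combines these with Lemma~\ref{lem:identify}, whose content is the symmetric statement that problems~1 and~2, being mutual label-swaps, cannot both have positive solution count once one of them has two: if $|S_1|=2$ then $|S_2|=0$, and, interchanging the roles of the two problems, if $|S_2|=2$ then $|S_1|=0$. A brief case analysis finishes the argument: if either set has exactly two elements the other is empty, giving a total of $2$; otherwise both sets have at most one element, giving a total of at most $2$. Hence there are at most two solutions in every case.

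Given that Lemma~\ref{lem:identify} is available, the theorem is almost immediate, so I do not expect a serious obstacle in the reduction itself. The one place to be careful is the bookkeeping that connects \emph{positions} to \emph{poses}: Theorem~\ref{th:solutions} counts poses for each labeled problem, and Lemma~\ref{lem:position-pose} guarantees that a position together with a fixed labeling determines the pose, so counting labeled solutions really does count the corresponding poses without ambiguity. The genuine difficulty of the whole result lives in Lemma~\ref{lem:identify} rather than in this theorem; the theorem's role is to package that lemma into the clean uniform bound of two.
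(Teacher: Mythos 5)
Your proposal is correct and follows essentially the same route as the paper: split the unlabeled problem into the two labeled problems given by the two possible landmark assignments, bound each by two via Theorem~\ref{th:solutions}, and use Lemma~\ref{lem:identify} (noting that the swapped assignment corresponds to the angles $(\rho_2,\rho_1,-\beta)$) to conclude that if one assignment yields two solutions the other yields none, so the total never exceeds two. Your extra remarks on disjointness and on Lemma~\ref{lem:position-pose} are harmless additions but not needed.
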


It is worth noting that there are cases with a unique solution. For instance, suppose the landmarks are at 
different heights, and the camera is located within the unique solution region of Theorem~\ref{th:solutions} 
\textit{and} both above the lower landmark and below the upper one (it's clear from Figure~\ref{fig:cones} that this 
region is non-empty). Then the tilt angles will allow the landmarks to be identified and labeled: only the tilt angle to the higher 
landmark will be less than $\pi/2$.

\begin{proof}[Proof of Lemma~\ref{lem:identify}]
Suppose we are in a non-singular case and there are two solutions for camera pose. By Theorem~\ref{th:solutions},
both tilt angles $\rho_i$ are in $(0,\pi)$. If either image tilt angle
$\rho_i$ is $\pi/2$, the other image tilt angle must not be $\pi/2$ or we would have a singular case. The two 
landmarks are then at different heights, and we can tell which landmark the tilt angle that's not $\pi/2$ corresponds to, 
since if that tilt angle is less than $\pi/2$, it corresponds to the higher landmark. 

So suppose neither tilt angle is $\pi/2$. Let $i=1$ and $j=2$. In the notation of Theorem~\ref{th:solutions},
the quadratic \eqref{eq:quad2} must two positive solutions (they cannot be zero, since that would not fit a tilt
angle in $(0,\pi)$). Since $a^*>0$ by the same reasoning by which we 
showed that $a>0$ in the proof of the theorem, this requires the first derivative of the quadratic at the origin to 
be negative, so $b^*<0$, and the value at zero to be positive, so $c^*>0$. Now note that if we let $a',b',c'$ be the 
coefficients in \eqref{eq:quad} calculated using the 
image tilt angles $(\rho_1',\rho_2',\beta')=(\rho_2,\rho_1,-\beta)$, then $b'=-b^*$ and $c'=c^*$. Hence, $b'>0$ and $c'>0$. 
Thus the quadratic in \eqref{eq:quad} for the primed image angles has a positive first derivative, a positive coefficient of 
the second degree term, and is positive at zero, so it cannot have any positive solutions. Thus, there are no solutions 
to the labeled landmark pose problem corresponding to the primed image angles. 

But finding a solution to the pose problem for labeled landmarks $\mathbold m_1$ and $\mathbold m_2$ and the 
image angles $\rho_1',\rho_2,\beta'$ is equivalent to finding a solution to the pose problem for labeled landmarks
$\mathbold m_2$ and $\mathbold m_1$ and image angles $\rho_1,\rho_2,\beta$. 
\end{proof}

\section{Horizontally coplanar landmarks}\label{sec:coplanarity}
As noted, one way to ensure the uniqueness of the solution is to place the landmarks on the same 
horizontal plane and constrain the camera to always lie above or below them. For instance, in some applications
the landmarks could be placed on a level floor, with the camera guaranteed to be above them. 

Additionally in this case, the calculations simplify which could aid real-time implementation on inexpensive hardware. 
In the notation of  the proof of Theorem~\ref{th:solutions}, we have $\delta=0$, and $\rho_1$ and $\rho_2$ are in either 
of the intervals $(0,\pi/2)$ or $(0,\pi/2)$. Let $i=2$
and $j=1$. Then in \eqref{eq:quad} we have $b=0$ and $c=-d^2$, so for the positive solution we have:
$$
	d_1 = \frac{d}{(1-2\cos\beta\cot\rho_j\tan\rho_i + (\cot\rho_j\tan\rho_i)^2)^{1/2}}
$$
and
$$
	d_2 = d_1\cot\rho_1\tan\rho_2
$$
while 
$$
	h_1 = h_2 = -d_1 \cot\rho_1.
$$

A consumer product with two landmarks on a horizontal line and a camera containing an accelerometer has existed for a while.
The Nintendo Wii game console was equipped with a remote gaming controller containing an accelerometer and an infrared 
camera with firmware that tracks up to four infrared landmarks, as well as an emitter bar containing two infrared
landmarks (each consisting of a cluster of light emitting diodes). In typical use, the emitter bar was centrally horizontally 
located below or above a television screen, and the Wii remote was used to operate an on-screen pointer. If the position of the emitter bar relative to the 
screen and the size of the screen were perfectly known, Theorem~\ref{th:solutions} would in principle allow the computation of the position 
of an on-screen pointer directly pointed at by the central axis of the controller, as long as the controller was not in the same plane as the 
emitter bar, and the acceleration due to user movement was filtered out. It is, however, worth noting that historically the Wii's operating system did 
\textit{not} attempt to precisely position the on-screen pointer along the axis of the controller. 

Preliminary testing showed that the precision of the Wii remote with two same-altitude infrared landmarks placed 
under a television screen and accelerometer (P2PA+accelerometer or P2PA) yields unsatisfactory pointing 
precision. The on-screen pointer moved jerkily and the positioning was not very accurate. This was compared against a control system based 
on using the Wii remote with P4P (and P3P plus heuristics as a fallback in case a landmark was invisible) with 
two additional infrared landmarks placed above the screen. The P4P approach provided entirely satisfactory pointing 
precision, and was sufficient to satisfactorily play an emulated version of Nintendo Duck Hunt, while the P2PA 
approach made the game frustrating. It is likely that using P2PA for pointing presents a special implementation 
challenge in games, due to the large amount of user movement which makes extracting the low-frequency gravity 
component of the acceleration more difficult.

As we saw, when the landmarks \textit{and} camera lie on the same horizontal plane, we have a singular 
case with infinitely many solutions. It is worth noting that if one adds a third landmark anywhere else on the same 
horizontal plane gains one a unique solution everywhere except on the line or circle through the three landmarks (depending
on whether they are colinear or not). For any one of the tilt angles is sufficient to show the camera must lie on 
the horizontal plane through the landmarks. The horizontal image angle between any two landmarks then constrains the 
camera to lie on a specific circular arc or line through those landmarks. Assume the camera does not lie on a circle 
or line through all three landmarks. Then if the landmarks are $A$, $B$ and $C$, the circle
or line through $A$, $B$ and the camera and the circle or line through $B$, $C$ and the camera will meet in at most two 
points. One such meeting point is $B$, but we have assumed the camera is constrained not to be at a landmark, so the
position of the camera is uniquely determined. 

Going back to the simple P2PA case of two landmarks on a horizontal line, it is of some practical interest to investigate what 
happens to the precision of pose determination when the camera altitude approaches the altitude of the two landmarks, given 
the finite resolution of the camera sensor images and of the accelerometer. We thus simulate an arrangement with 
randomized errors. The simulation involves the landmarks $150$~mm apart, and the virtual camera placed $500$~mm away from 
the line through the landmarks. In one experiment, the camera was in front of the left landmark, and in the other
it was mid-way between the landmarks. The camera was then moved vertically over an altitude range from $0.001$~mm to $500$~mm,
measured relative to the landmarks. 

Two sources of error were included. First, considering the camera to be spherical and thus sensing the direction of 
each landmark as a unit vector $\mathbold v_i$, a random error was added resulting in the sensed direction being 
uniformly distributed over a circular region of angular radius $0.36^\circ$ centered on the ground truth. Additionally, 
a random error uniformly distributed over $[-0.001g,0.001g]$ was added to the ground truth of each accelerometer axis. 
For each experiment, $1000$ vertical positions were tested, with $1000$ samples at each position used to calculate
the rms error using the two landmarks and accelerometer data. See Figure~\ref{fig:sim} for results.

\begin{figure}
\centering
  \begin{minipage}[b]{0.45\textwidth}
    \includegraphics[width=\textwidth]{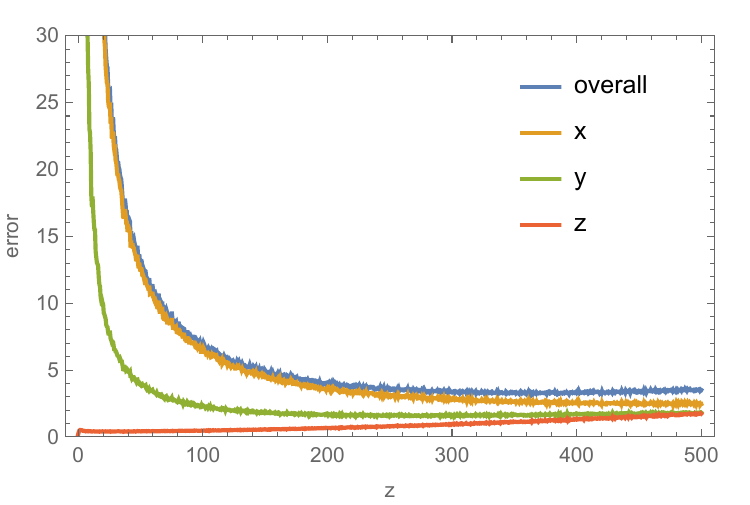}
  \end{minipage}
  \hfill
  \begin{minipage}[b]{0.45\textwidth}
    \includegraphics[width=\textwidth]{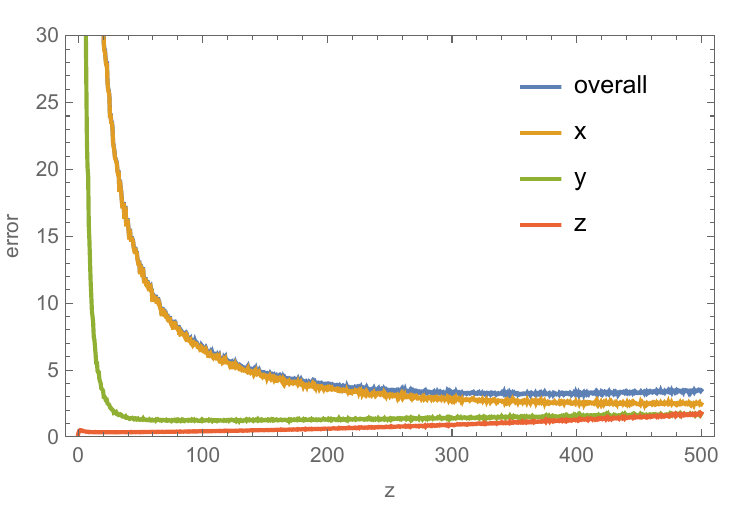}
  \end{minipage}
\caption{Simulated error with camera aligned with left landmark (left) and midpoint between landmarks (right).}
\label{fig:sim}
\end{figure}
As expected, overall error was very high near the singularity that occurs when the camera is in the same plane as the 
landmarks. At 1~mm altitude, the overall rms error was 354~mm for the camera in front of the left landmark and 340~mm in front 
of the center landmark. The rms error dipped below 20~mm at altitudes of approximately 33~mm and 32~mm, respectively. 

With a coordinate system where the landmarks are at $(0,0,0)$ and $(150,0,0)$, so the $x$-axis goes through the landmarks,
and the camera is located at $(0,-500,z)$ or $(75,-500,z)$ depending on the experiment, we also observed the positioning
error in each axis separately. Altitude error ($z$) was small even near the singularity at $z=0$, which is to be expected since
it is possible to detect the camera altitude correctly even at the singularity. The errors in $x$ and $y$ were both very large near
the singularity, but the $x$-error accounted for the bulk of the overall error at low to moderate camera altitudes. 

The practical consequence is that if one uses the configuration with landmarks on the same horizontal plane to ensure 
unique solution, one should constrain the camera to be sufficiently far from this plane to ensure acceptable accuracy.

\section{Physical implementation on smartphone}
Smartphones typically include both a camera and an accelerometer, and hence should be capable of identifying pose 
using the methods described here. An algorithm for the case where the landmarks lie on the same horizontal plane 
was implemented in Java for Android phones using the OpenCV image processing library and tested with a Google Pixel~7
Pro phone's main camera, with camera intrinsic matrix computed with OpenCV's calibration algorithm and a calibration
image.~\cite{Pruss24} Four ArUco fiducial markers~\cite{Aruco} were printed on a sheet letter size paper. Each marker was a square 
$50$~mm per side, with the marker centers describing a rectangle with sides $150$~mm and $200$~mm. The centers of two 
markers spaced $150$~mm apart were used as landmarks $1$ and $2$. The remaining two markers were used as controls.
The software was designed to work for two cases, in both of which the landmarks lie on the same horizontal plane.
In one case, the image was placed horizontally with both landmarks and controls on the same plane, and in the other it 
was placed vertically with controls above the landmarks.

\begin{figure}
\includegraphics[height=8cm]{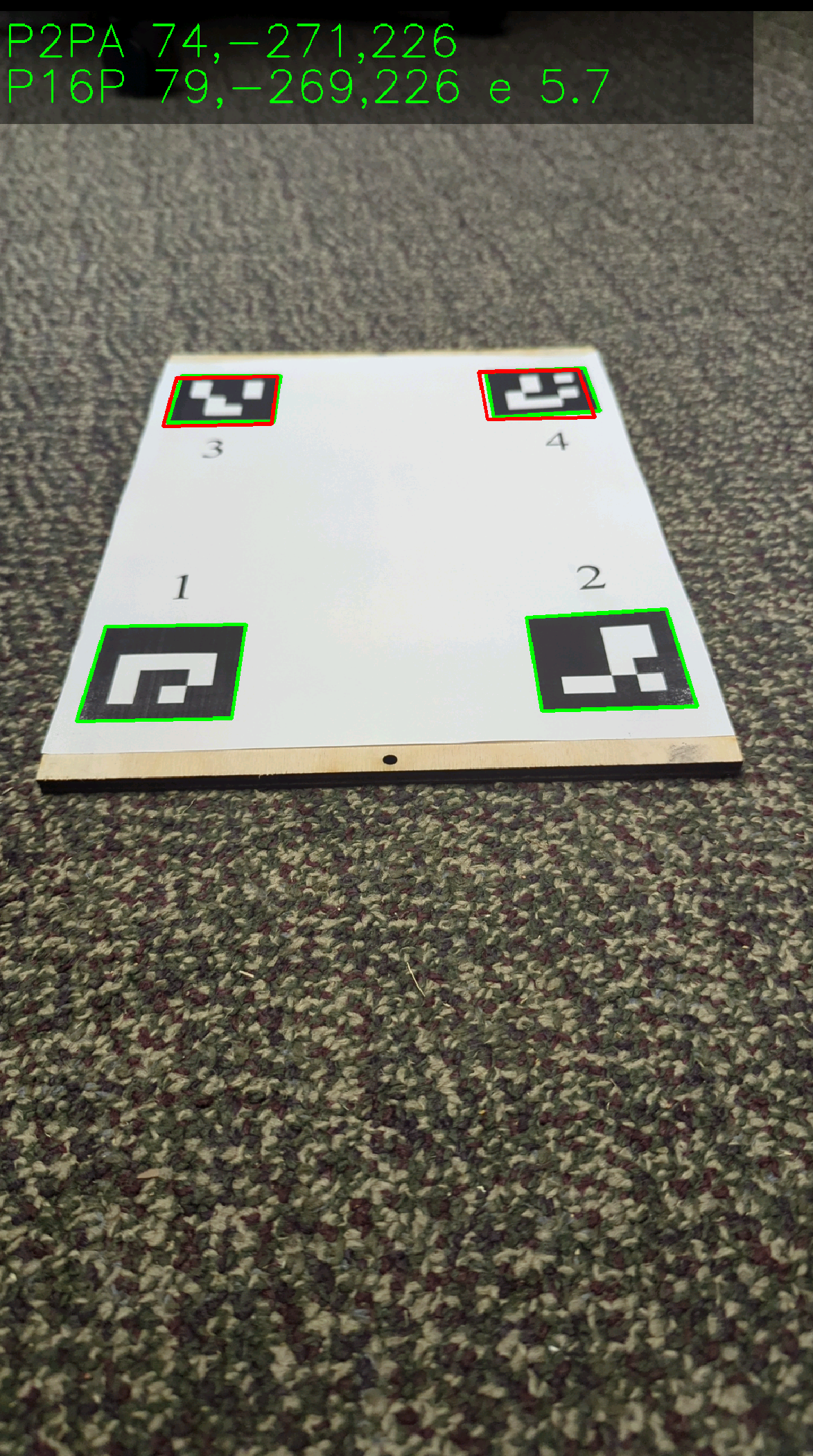}
\caption{Markers 1 and 2 are landmarks, and markers 3 and 4 are controls. Red outlines of controls are computed with P2PA.}
\label{fig:phone}
\end{figure}
The software outlines all four markers in green, and computes the camera position and rotation using the two landmarks and 
accelerometer and the methods in the proofs in this paper (P2PA). Once pose is computed, the physical position of the control 
markers can be used to computed expected values for the position of the images of the control markers in the camera image,
and these are outlined in red, providing a quick visual confirmation of the correctness of the camera pose (see Figure~\ref{fig:phone}).
Visually, typically the red outlines are sufficiently close to the correct positions that one could use this method as 
a sufficiently accurate control for an on-screen pointer or a game involving targeting.

Additionally, the corners of all four markers were used with OpenCV's \texttt{Calib3d.solvePNP()} method to 
compute camera pose, and the P2PA computation was tested against this P16P calculation. With the 
marker page horizontally on the floor, for a wide range
of phone positions at approximately $500$~mm above the floor, the P2PA calculation was observed to lie within $5$ to 
$15$~mm of the P16P calculation when the phone was held in place in one hand. During deliberate movements of the phone, the error would go up to 
around $35$~mm, presumably due to the detection of the gravity vector being affected by acceleration (note that the software 
used the default Android gravity filtering). This suggests that P2PA may have more limited applicability in gaming or on an 
accelerating drone. 

\begin{figure}
\centering
  \begin{minipage}[b]{0.45\textwidth}
    \includegraphics[width=\textwidth]{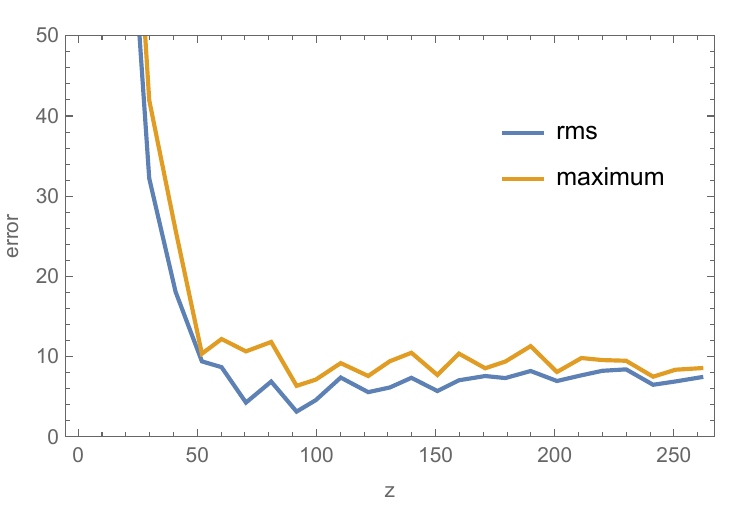}
  \end{minipage}
  \hfill
  \begin{minipage}[b]{0.45\textwidth}
    \includegraphics[width=\textwidth]{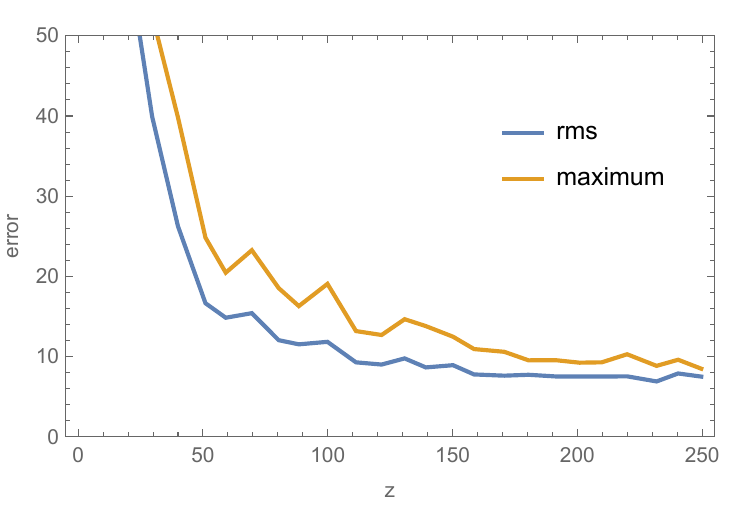}
  \end{minipage}
\caption{Error with camera aligned with left landmark (left) and midpoint between landmarks (right).}
\label{fig:exp}
\end{figure}

For a more systematic test, the phone was attached to a tripod, with camera tilted approximately 
$6^\circ$ down and placed approximately orthogonally to the wall in azimuth. The marker page was posted vertically on a wall. 
The tripod allowed the camera to be moved up and down. Camera positioning errors for our P2PA algorithm were estimated at 
different camera heights relative to the landmarks, with P16P results standing in for the ground truth. In one experiment, the camera 
was placed approximately 
horizontally mid-way between the markers (using P16P data for positioning) and in another it was approximately horizontally 
aligned with the centers of the left markers. The $z$-axis always points upward, starting at $0$ with the landmarks. The phone was
moved upwards in increments of approximately $10$~mm (measured by P16P), and at each elevation 50 data points (P16P and 
P2PA) were captured. 

The markers were laser-printed and attached to a laser-cut plywood rectangle with a symmetrically oriented hole at the top 
for a nail. Leveling was achieving simply by hanging on a wall and letting it stabilize under gravity. The experiment 
also relied on the standard factory calibration of the phone 
accelerometer. 

The average and maximum P2PA errors for each data collection are plotted in Figure~\ref{fig:exp}. 
As we approach the singularity at $z=0$, altitudes below approximately 30~mm are out of range on the graphs, and involve 
unusably high overall errors, namely an average of 65~mm at approximately 20~mm altitude, and 242~mm at approximately 10~mm 
altitude (both averaged between both experiments).

Nonetheless, the P2PA estimate of the $z$-coordinate continued to be usable: the maximum error in $z$ observed in either
experiment was $6$~mm. This corresponds to the geometrical fact that the camera can be correctly identified to lie at the horizontal
plane of the landmarks by the fact that $\rho_1=\rho_2=0$. 

\section{Conclusions}
A precise characterization of the multiplicity of solutions for the P2P problem with accelerometer and gravity-aligned 
landmarks is given, expanding previous work of \cite{KBP10} and correcting previous work of \cite{DGMP13,DGMP14,DGMP21}. Of 
particular interest is the case of landmarks at the same 
altitude, as that yields unique solutions except on the horizontal plane passing through the landmarks. The resulting 
pose-recovery technique can be implemented on consumer-level phone hardware, though it may not be sufficient for pointer-
and aiming-based games, or for location of a quickly accelerating drone, due to error in extracting the gravity vector
from accelerometer data. It is also noted that when the landmarks are unlabeled, there are still always at most two solutions,
and there are still some cases with one solution.

\end{document}